\documentclass[11pt]{article}
\ifdefined\pdfminorversion\pdfminorversion=7\fi

\usepackage[preprint,hyperref]{acl}

\usepackage{times}
\usepackage{latexsym}
\usepackage[T1]{fontenc}
\usepackage[utf8]{inputenc}
\usepackage{microtype}
\usepackage{inconsolata}
\usepackage{graphicx}

\usepackage{amsmath,amsfonts,bm}

\def\eqref#1{equation~\ref{#1}}

\def\1{\bm{1}}

\def\vtheta{{\bm{\theta}}}

\def\vx{{\bm{x}}}
\def\vy{{\bm{y}}}

\DeclareMathAlphabet{\mathsfit}{\encodingdefault}{\sfdefault}{m}{sl}
\SetMathAlphabet{\mathsfit}{bold}{\encodingdefault}{\sfdefault}{bx}{n}

\newcommand{\E}{\mathbb{E}}

\usepackage{algorithm}
\usepackage{algorithmic}

\usepackage{amsmath}
\usepackage{amssymb}
\usepackage{mathtools}
\usepackage{amsthm}
\theoremstyle{plain}
\usepackage{thm-restate}

\declaretheorem[name=Proposition]{proposition}
\declaretheorem[name=Lemma]{lemma}
\makeatletter
\let\c@proposition\c@theorem

\let\c@lemma\c@theorem

\makeatother

\usepackage{subcaption}
\usepackage{booktabs}
\usepackage{xltabular} %
\usepackage{multirow}
\usepackage{makecell}
\usepackage{listings}
\usepackage{tabularx}
\usepackage{placeins}
\usepackage{float} %
\usepackage{wrapfig}
\usepackage{adjustbox}

\usepackage{url}

\usepackage[capitalize,noabbrev]{cleveref}
\crefname{section}{Sec.}{Secs.}
\Crefname{section}{Section}{Sections}
\crefname{table}{Tab.}{Tabs.}
\Crefname{table}{Table}{Tables}
\crefname{figure}{Fig.}{Figs.}
\Crefname{figure}{Figure}{Figures}
\crefname{equation}{Eq.}{Eqs.}
\Crefname{equation}{Equation}{Equations}
\crefname{assumption}{Assumption}{Assumptions}
\Crefname{assumption}{Assumption}{Assumptions}
\crefname{theorem}{Theorem}{Theorems}
\Crefname{theorem}{Theorem}{Theorems}
\crefname{remark}{Remark}{Remarks}
\Crefname{remark}{Remark}{Remarks}
\crefname{proposition}{Proposition}{Propositions}
\Crefname{proposition}{Proposition}{Propositions}
\crefname{lemma}{Lemma}{Lemmas}
\Crefname{lemma}{Lemma}{Lemmas}

\newcommand{\appref}[1]{appendix~\labelcref{#1}}
\newcommand{\Appref}[1]{Appendix~\labelcref{#1}}

\newif\ifappendixcontents
\let\SavedAddcontentsline\addcontentsline
\renewcommand{\addcontentsline}[3]{%
  \ifappendixcontents\SavedAddcontentsline{#1}{#2}{#3}\fi}

\title{Hindsight-Privileged RLVR: \\ Single-Rollout Token Credit Without Group Sampling}

\author{
  \textbf{Wei Chen\textsuperscript{1}, Yubing Wu\textsuperscript{1}, Zhou Xu\textsuperscript{2}, Haoyue Zhang\textsuperscript{3}, Delu Zeng\textsuperscript{1}, Jiahao Wang\textsuperscript{4}} \\[2pt]
  \textbf{Junmei Yang\textsuperscript{1}, Pan tiewen\textsuperscript{5}, Jingfeng Liu\textsuperscript{6}, Zhiheng Zhou\textsuperscript{1}, Bo Li\textsuperscript{1}, Min Chen\textsuperscript{1}} \\[4pt]
  {\small\normalfont \textsuperscript{1}South China University of Technology\quad \textsuperscript{2}Tsinghua University} \\
  {\small\normalfont \textsuperscript{3}The Hong Kong University of Science and Technology\quad \textsuperscript{4}Noah Research Institute} \\
  {\small\normalfont \textsuperscript{5}Shenzhen Technology University\quad \textsuperscript{6}LinkSprite}
}

\begin{document}

\maketitle

\begin{abstract}
Reinforcement learning with verifiable rewards (RLVR) trains language models from checked task outcomes, but assigning a terminal reward to individual reasoning tokens or actions remains difficult. Group-relative methods such as Group Relative Policy Optimization (GRPO) sample several attempts per prompt, but lose their reward-based learning signal when all attempts receive the same outcome. We propose Hindsight-Privileged RLVR (HP-RLVR), a single-rollout method that turns a completed trajectory's own future into token-level update weights. The generating policy snapshot scores each eligible token under its causal history and under a hindsight context containing the realized future and verifier feedback. A normalized Jensen--Shannon divergence between the two views weights the reward relative to a historical baseline, yielding bounded token credit for both successful and failed attempts, with no reference solution or learned critic. In paired Qwen3-8B runs over a shared $100$-step schedule, HP-RLVR trains $1.21\times$, $1.46\times$, and $2.30\times$ faster than GRPO on mathematics, code generation, and ALFWorld, using one eighth the rollouts per prompt. Accuracy is comparable overall, with HP-RLVR ahead on mathematics at all three sizes and GRPO ahead on code and ALFWorld. In ALFWorld, $43.2\%$ (4B) and $49.0\%$ (8B) of GRPO's gradient-norm records are zero, versus none for HP-RLVR.
\end{abstract}

\section{Introduction}
\label{sec:intro}

\begin{figure}[t]
\centering
\includegraphics[width=\linewidth]{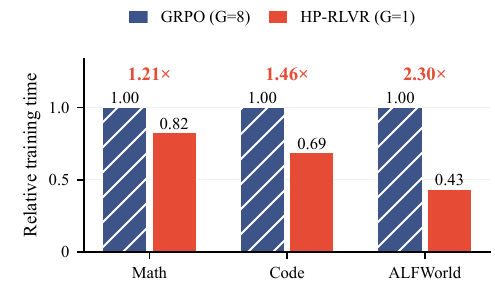}
\caption{Qwen3-8B training cost at four prompts per step and $100$ steps. HP-RLVR uses group size $G{=}1$, versus GRPO's $G{=}8$. Bars are relative to GRPO, with labels above each pair showing speedup.}
\label{fig:single_rollout_cost}
\end{figure}

\begin{figure*}[!t]
\centering
\includegraphics[width=\linewidth]{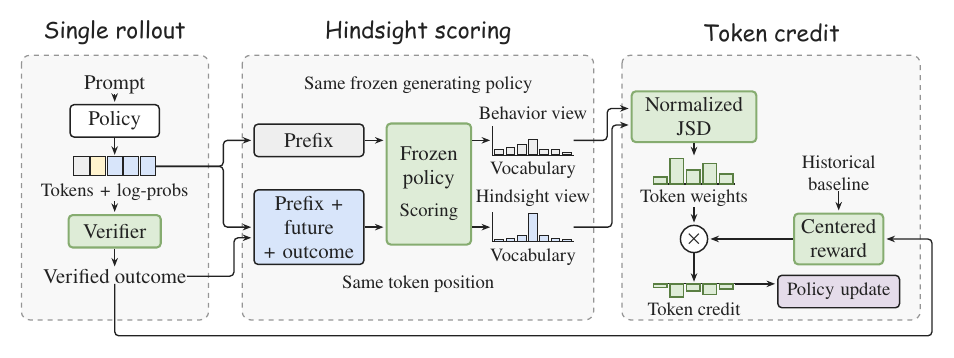}
\caption{Overview of HP-RLVR's single-rollout training pipeline with training-only hindsight. After generation, the frozen generating policy scores the trajectory under causal and hindsight contexts, without additional sampling. The paired profiles show full-vocabulary distributions at the same token position. Normalized JSD weights the historically centered verifier reward to produce token credit. Bars are schematic, and dashed frames group stages.}
\label{fig:hp_rlvr_overview}
\end{figure*}

Reinforcement learning with verifiable rewards (RLVR) trains language models through outcomes checked by symbolic verifiers, test suites, or interactive environments \citep{deepseekai2025r1}. Terminal verification supplies supervision without labeling every intermediate decision. Turning that supervision into efficient learning, however, requires addressing both the cost of obtaining an update signal and its allocation within a trajectory.

First, group-relative learning obtains its signal through repeated attempts. Group Relative Policy Optimization (GRPO) compares multiple responses to each prompt, replacing a learned critic with a baseline computed from group rewards \citep{shao2024deepseekmath}. This comparison requires multiple rollouts, yet homogeneous outcomes give zero reward advantages \citep{yu2025dapo}. Additional sampling therefore incurs cost without necessarily supplying an informative comparison.

Second, comparisons between complete attempts leave credit coarse within each attempt. Outcome-supervised GRPO broadcasts a trajectory's advantage to its tokens \citep{shao2024deepseekmath}, despite the different roles of intermediate decisions in reasoning and multi-turn interaction \citep{lightman2023verify,feng2025gigpo}. Process supervision supplies intermediate labels \citep{lightman2023verify}, implicit process rewards learn dense signals \citep{yuan2024free,cui2025prime}, and intermediate-value estimation can require additional continuations \citep{kazemnejad2025vineppo}. These approaches motivate a complementary question: can a single verified attempt, whether successful or failed, itself supply a finer learning signal?

A completed attempt's realized future provides context for revisiting earlier decisions \citep{harutyunyan2019hindsight,mesnard2021counterfactual}. We introduce Hindsight-Privileged RLVR (HP-RLVR), which uses this future as training-only privilege (\cref{fig:hp_rlvr_overview}). The generating policy compares token distributions with and without the future and verifier feedback. Historical reward centering directs the update and can give failures negative credit, while normalized Jensen--Shannon divergence supplies bounded, nonnegative token weights. This yields local credit from one rollout per prompt, without a reference solution or learned critic (\cref{sec:method}).

We evaluate Qwen3 models (1.7B/4B/8B) \citep{qwen3team2025qwen3} on mathematics, code generation, and ALFWorld \citep{shridhar2021alfworld}. \Cref{fig:single_rollout_cost} summarizes the Qwen3-8B cost benefit at a shared prompt and step budget, using one eighth as many rollouts per prompt. Task accuracy and training-signal diagnostics assess the resulting single-rollout updates (\cref{fig:checkpoint_accuracy,fig:training_signal}).

Our contributions are:
\begin{itemize}\setlength{\itemsep}{2pt}
\item A single-rollout hindsight construction using the generating policy and its own realized future, without reference solutions or a learned critic (\cref{sec:method_scoring}).
\item A bounded token-credit rule for both successful and failed trajectories, combining historical reward centering with normalized distributional divergence (\cref{sec:method_credit}).
\item Lower training cost at a shared step schedule, with accuracy comparisons and protocol diagnostics (\cref{sec:experiments}).
\end{itemize}

\section{Background and Related Work}
\label{sec:related}

\subsection{Related work}
\label{sec:related_work}

\paragraph{Outcome-based and hindsight credit assignment.}
Methods for assigning local credit differ in how they obtain information beyond a terminal reward. Learned value baselines estimate future returns \citep{schulman2016highdimensional}, and RUDDER redistributes delayed rewards through a return predictor \citep{arjona2019rudder}. In language-model reasoning, process supervision supplies intermediate labels \citep{lightman2023verify}, implicit process rewards learn dense signals from outcome supervision \citep{yuan2024free,cui2025prime}, and VinePPO estimates intermediate values through additional continuations \citep{kazemnejad2025vineppo}. Hindsight credit assignment instead uses observed future information to revisit past actions \citep{harutyunyan2019hindsight}. Counterfactual credit assignment conditions value functions on suitable future events to reduce policy-gradient variance \citep{mesnard2021counterfactual}.

This distinction also matters for multi-turn agents. GiGPO augments episode-level group advantages with step-level comparisons at repeated states across sampled trajectories \citep{feng2025gigpo}. HCAPO uses the language model as a post-hoc critic and combines hindsight action evaluation with multi-scale advantages \citep{tan2026hcapo}. HP-RLVR draws on the same idea of revisiting decisions after completion, but constructs its weights from the full-vocabulary divergence between two views of the generating snapshot. 

\paragraph{Privileged supervision and failures.}
On-policy distillation scores student-generated sequences with a teacher \citep{agarwal2024onpolicy}. OPSD conditions the same model on privileged reference information \citep{zhao2026selfdistilled}. Feedback-based self-distillation obtains privilege from interaction \citep{hubotter2026reinforcement,liu2026hero,yeo2026hintsd}. H$^2$SD uses verified student responses to refine successful trajectories and reference hints to correct failures \citep{cai2026h2sd}. HP-RLVR instead uses an unsigned divergence to weight reward-based updates, including updates on failures without a corrective reference. Negative reinforcement can itself improve reasoning \citep{zhu2025negative}, but update dynamics depend on weighting \citep{chen2026disentangled}, loss normalization, and length dynamics \citep{liu2025understandingr1zero,yu2025dapo}. HP-RLVR and GRPO use trajectory-balanced loss reduction, and we report training-signal availability under this shared protocol (\cref{sec:exp_diag}).

\subsection{Problem formulation}
\label{sec:prelim_rlvr}

Let $\pi_\vtheta$ generate a response or interactive trajectory $\vy$ for a prompt $\vx\sim\mathcal D$. A verifier assigns $R(\vx,\vy)\in[0,1]$, binary for task success or graded for partial credit. The objective is
\begin{equation}
\mathcal J_{\mathrm{RLVR}}(\theta)=\E_{\vx\sim\mathcal D,\,\vy\sim\pi_\vtheta}[R(\vx,\vy)].
\label{eq:rlvr}
\end{equation}

For interactive trajectories, this expectation also includes environment transitions, but only policy-generated tokens contribute policy scores. With a parameter-independent verifier and environment, the score-function gradient is $\E[R\nabla_\vtheta\log\pi_\vtheta(\vy\mid\vx)]$.

GRPO samples $G$ trajectories per prompt and assigns
\begin{equation}
\hat A_i=\frac{R_i-\overline R}{\sigma_R+\epsilon_{\mathrm{stab}}},
\qquad
\overline R=\frac1G\sum_{i=1}^G R_i,
\label{eq:grpo_adv}
\end{equation}
where $\sigma_R$ is the group reward standard deviation and $\epsilon_{\mathrm{stab}}>0$ stabilizes the denominator. The same advantage applies to every generated token of trajectory $i$. Homogeneous groups give $\hat A_i=0$ for all members. \Appref{app:notation} summarizes notation.

\section{Hindsight-Privileged RLVR}
\label{sec:method}

\subsection{From trajectory rewards to token credit}
\label{sec:method_setup}

With a single rollout, group-relative credit vanishes: $\overline R=R_i$ and $\hat A_i=0$ (\cref{eq:grpo_adv}). A baseline fixed before sampling can instead supply a reward comparison without requiring another attempt at the same prompt. The following standard identity provides the basis for this construction.

\begin{samepage}
\begin{lemma}[Historical baseline invariance]
\label{lem:baseline}
Condition on the completed history $\mathcal H_k$, and let $\E_k$ denote expectation over a fresh prompt from $\mathcal D$ and a matched on-policy rollout. Assume the task distribution, verifier, and environment are parameter-independent, and the trajectory law has differentiable, fixed support with integrably dominated derivatives as specified in \appref{app:baseline_assumptions}. Let $g_\theta$ be the sum of policy scores over all generated tokens. Any $\mathcal H_k$-measurable baseline $b_k\in[0,1]$, held fixed when differentiating, satisfies
\begin{equation}
\E_k[(R-b_k)g_\theta]
=\nabla_\vtheta\mathcal J_{\mathrm{RLVR}}(\theta).
\label{eq:historical_baseline_identity}
\end{equation}
\end{lemma}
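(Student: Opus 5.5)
The identity splits into two pieces: the likelihood-ratio identity $\E_k[R\,g_\theta]=\nabla_\vtheta\mathcal J_{\mathrm{RLVR}}(\theta)$, and the vanishing of the baseline term, $\E_k[b_k g_\theta]=0$. Throughout I condition on $\mathcal H_k$. Because $b_k$ is $\mathcal H_k$-measurable, it is a constant under $\E_k$ and can be pulled outside the expectation. This is also where "held fixed when differentiating" enters: $\nabla_\vtheta b_k$ never appears. By linearity, $\E_k[(R-b_k)g_\theta]=\E_k[Rg_\theta]-b_k\E_k[g_\theta]$, so both pieces reduce to statements about the fresh prompt and on-policy rollout alone.

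The first step is to write the trajectory law. For a prompt $\vx$ and trajectory $\vy$, the density factorizes as $p_\theta(\vx,\vy)=\mathcal D(\vx)\prod_t \pi_\vtheta(y_t\mid \vx,y_{<t})\prod_s P(o_s\mid\cdot)$. The environment factors $P$ and $\mathcal D$ do not depend on $\theta$ by assumption, and the product over $t$ runs over policy-generated tokens only. On the fixed support, the log-derivative is therefore $\nabla_\vtheta\log p_\theta=\sum_t\nabla_\vtheta\log\pi_\vtheta(y_t\mid\cdot)=g_\theta$; environment terms drop out.

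The second step proves the first piece. I write $\mathcal J_{\mathrm{RLVR}}(\theta)=\int R\,p_\theta$ and interchange gradient and integral. The integrable domination hypothesis of \appref{app:baseline_assumptions} justifies this through dominated convergence, applied to difference quotients via the mean value theorem. The fixed support guarantees that no boundary terms arise, and the parameter-independent verifier ensures $\nabla R=0$. We obtain $\nabla\mathcal J=\int R\,\nabla p_\theta=\int R\,p_\theta\,\nabla\log p_\theta=\E_k[R g_\theta]$. The fresh prompt and rollout are independent of $\mathcal H_k$, so the conditional expectation $\E_k$ equals the unconditional one.

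The third step proves the second piece by applying the same argument with $R\equiv1$. This gives $\E_k[g_\theta]=\int\nabla p_\theta=\nabla\int p_\theta=\nabla 1=0$, and hence $b_k\E_k[g_\theta]=0$. Since $b_k\in[0,1]$ is bounded, $b_k g_\theta$ is integrable whenever $g_\theta$ is, so the linearity split is legitimate.

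I expect the main obstacle to be the interchange of differentiation and integration in the second and third steps. The trajectory space mixes discrete tokens with possibly unbounded lengths and environment randomness, so the interchange needs a dominating function for $\sup_{\theta'}|R\,\nabla p_{\theta'}|$ near $\theta$. My first attempt would be to use the stated dominated-derivative assumption directly. With $|R|\le1$, the same dominating function also covers the $R\equiv1$ case. A secondary subtlety is making precise that "matched on-policy rollout" means the rollout is drawn from $\pi_\vtheta$ at the current $\theta$, independently of $\mathcal H_k$. This justifies treating $\E_k$ as an integral against $p_\theta$ while $b_k$ is held constant.
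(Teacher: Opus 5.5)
Your proposal is correct and follows essentially the same route as the paper: split $\E_k[(R-b_k)g_\theta]$ by linearity, then obtain $\E_k[Rg_\theta]=\nabla_\vtheta\mathcal J_{\mathrm{RLVR}}(\theta)$ and $\E_k[g_\theta]=0$ by differentiating under the integral using the fixed support and the dominating function $d$, and pull out the $\mathcal H_k$-measurable constant $b_k$. The one point to tighten is your claim that the rollout is independent of $\mathcal H_k$, so that $\E_k$ equals an unconditional expectation: in training the current parameter itself depends on the history, so that claim is generally false, but it is also unnecessary, since the argument uses only the conditional sampling law (B1) given the fixed history.
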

\end{samepage}

Thus a single on-policy trajectory yields an unbiased estimator of the unweighted reward gradient under these assumptions. We choose the mean reward of earlier trajectories. For batch $k$, let $\mathcal C_k$ contain trajectories completed before that batch, and define
\begin{equation}
\begin{aligned}
b_k&=\frac1{|\mathcal C_k|}\sum_{i'\in\mathcal C_k}R_{i'},\\
s_i&=R_i-b_k.
\end{aligned}
\label{eq:baseline}
\end{equation}
When history is empty, use an initial $b_{\mathrm{init}}\in[0,1]$ fixed before sampling. The current batch enters the running sum and count only after its advantages are constructed.

Broadcasting $s_i$ still assigns the same credit to every token. For generated position $j$, we therefore separate trajectory evaluation from local allocation:
\begin{equation}
A_j=\underbrace{(R_i-b_k)}_{\text{trajectory evaluation}}\,
\underbrace{w_j}_{\text{local allocation}}.
\label{eq:advantage}
\end{equation}
Here, $w_j\in[0,1]$ is a token weight. The completed trajectory's future offers additional context for revisiting earlier decisions. We use its effect on the generating policy's predictions to construct $w_j$ (\cref{fig:hp_rlvr_overview}).

\subsection{Hindsight-conditioned scoring}
\label{sec:method_scoring}

Let $h_j$ be the history visible when token $y_j$ was generated, $F_j$ the allowed future from the same trajectory, and $O$ the textual verifier report. With the generating snapshot frozen, the two scoring distributions are
\begin{equation}
\begin{aligned}
p_j&=\pi_{\theta_{\mathrm{old}}}(\cdot\mid h_j),\\
q_j&=\pi_{\theta_{\mathrm{old}}}(\cdot\mid h_j,F_j,O).
\end{aligned}
\label{eq:views}
\end{equation}
Both views score the same position. $F_j$ excludes $y_j$ and contains the allowed continuation or later agent transitions. Only the training scorer receives $F_j$ and $O$ (\appref{app:sampling_scoring}). Both views reuse the same completed trajectory, so no additional continuations are sampled (\cref{alg:hp_rlvr}). Their distributional change supplies the local weight defined below.

A task adapter renders the verifier report as an outcome descriptor $O$, for example ``the completed trajectory was unsuccessful'' or ``the solution passed $7$ of $10$ tests.'' The adapter restates the report without adding an independent correctness judgment. An eligible future $F_j$ contains only allowed information from the same realized trajectory and excludes the scored token itself. For a single response it is the allowed continuation after that token; for an agent it comprises the subsequent executed actions and observations, starting at the next transition. Scoring turn $t$ therefore uses transitions from $(\mathrm{action}_{t+1},\mathrm{observation}_{t+2})$ onward, excluding the immediate observation that could trivially reveal the current action. Future information is available only after completion, and never during rollout generation or deployment.

\subsection{Outcome-centered token credit}
\label{sec:method_credit}

The local allocation should vanish when the causal and hindsight views agree, and be positive when they differ. Jensen--Shannon divergence (JSD) satisfies this criterion \citep{lin1991divergence}.

\begin{lemma}[Zero divergence]
\label{lem:null_hindsight}
Let $p,q$ be probability mass functions on a finite vocabulary $\mathcal V$, and let $m=(p+q)/2$. With natural logarithms and zero-mass terms interpreted as zero,
\begin{equation*}
\operatorname{JSD}(p,q)
=\tfrac12 D_{\mathrm{KL}}(p\Vert m)
+\tfrac12 D_{\mathrm{KL}}(q\Vert m)
\end{equation*}
is nonnegative and equals zero if and only if $p(v)=q(v)$ for every $v\in\mathcal V$.
\end{lemma}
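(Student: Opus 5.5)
The plan is to reduce both claims to Gibbs' inequality for the Kullback--Leibler divergence, applied to each of the two terms against the mixture $m$. First I will check that both terms are well defined and finite. Whenever $p(v)>0$ we have $m(v)\ge p(v)/2>0$, and likewise for $q$. So every summand $p(v)\log(p(v)/m(v))$ with $p(v)>0$ has a positive denominator, and the summands with $p(v)=0$ are zero by the stated convention. Hence $D_{\mathrm{KL}}(p\Vert m)$ and $D_{\mathrm{KL}}(q\Vert m)$ are finite sums over the finite set $\mathcal V$, and in fact each is at most $\log 2$.

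Next comes nonnegativity. For $D_{\mathrm{KL}}(p\Vert m)$ I restrict the sum to the support $S_p=\{v:p(v)>0\}$ and use $\log x\le x-1$, which gives
\[
-D_{\mathrm{KL}}(p\Vert m)=\sum_{v\in S_p}p(v)\log\frac{m(v)}{p(v)}\le\sum_{v\in S_p}\bigl(m(v)-p(v)\bigr)=\sum_{v\in S_p}m(v)-1\le 0 .
\]
The same argument applies to $q$, and averaging the two bounds gives $\operatorname{JSD}(p,q)\ge 0$.

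For the equality case, the direction "if" is immediate: $p=q$ implies $m=p=q$, so every logarithm is $\log 1=0$. For "only if", suppose $\operatorname{JSD}(p,q)=0$. Both KL terms are nonnegative, so each must vanish. Then every inequality in the display above is tight. Tightness of $\log x\le x-1$ forces $m(v)=p(v)$ for every $v\in S_p$, and tightness of the last step forces $\sum_{v\in S_p}m(v)=1$, so $m$ vanishes off $S_p$. Since $p$ also vanishes off $S_p$, we get $m=p$ everywhere on $\mathcal V$. Finally, $m=p$ means $(p+q)/2=p$, so $q=p$.

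The only delicate step is the zero-mass bookkeeping in the equality case. Tightness of the logarithm inequality only controls points in $\operatorname{supp}(p)$, so the extra condition $\sum_{S_p}m=1$ is what handles points where $p(v)=0$ but possibly $q(v)>0$. An alternative is to use strict convexity of $x\log x$ in Jensen's inequality, but the elementary bound $\log x\le x-1$ with equality only at $x=1$ keeps everything finite and explicit.
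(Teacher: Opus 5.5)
Your proof is correct and follows essentially the same route as the paper. The paper's function $\phi(t)=t\log t-t+1\ge0$, with equality only at $t=1$, is your bound $\log x\le x-1$ in rescaled form, applied at $t=p(v)/m(v)$; the only difference is bookkeeping: the paper sums over $\{v:m(v)>0\}$ and absorbs points with $p(v)=0<m(v)$ through $\phi(0)=1$, whereas you sum over $\operatorname{supp}(p)$ and carry the slack $\sum_{S_p}m-1$, and you rightly note that $D_{\mathrm{KL}}(p\Vert m)=0$ alone already forces $p=q$.
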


Motivated by \cref{lem:null_hindsight}, we use the normalized divergence as the token weight. With $m_j=(p_j+q_j)/2$, define
\begin{equation}
\begin{aligned}
w_j&=\frac{\operatorname{JSD}(p_j,q_j)}{\log 2}
\in[0,1],
\end{aligned}
\label{eq:weight}
\end{equation}
The normalization gives the following bound on the resulting credit.

\begin{proposition}[Bounded credit]
\label{prop:bounded_credit}
Let $p_j,q_j$ be probability mass functions on $\mathcal V$, $R_i,b_k\in[0,1]$, and define $w_j,A_j$ by \cref{eq:weight,eq:advantage}. Every scored position satisfies
\begin{equation*}
0\le w_j\le1,\qquad |A_j|\le|R_i-b_k|\le1.
\end{equation*}
Moreover, $w_j=1$ if and only if $p_j$ and $q_j$ have disjoint supports.
\end{proposition}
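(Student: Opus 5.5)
The plan is to first bound $\operatorname{JSD}(p_j,q_j)$ between $0$ and $\log 2$ pointwise, then transfer these bounds to $A_j$, and finally characterize equality. The lower bound $w_j\ge 0$ is immediate from \cref{lem:null_hindsight}. For the upper bound, write $m=(p+q)/2$ and expand
\[
\operatorname{JSD}(p,q)=\tfrac12\sum_{v}\Bigl[p(v)\log\tfrac{2p(v)}{p(v)+q(v)}+q(v)\log\tfrac{2q(v)}{p(v)+q(v)}\Bigr].
\]
For each $v$ with $p(v)+q(v)>0$, the two ratios satisfy $\tfrac{p(v)}{p(v)+q(v)}\le1$ and $\tfrac{q(v)}{p(v)+q(v)}\le1$. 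Hence each logarithm is at most $\log 2$, and the bracketed term is at most $(p(v)+q(v))\log 2$. Summing over $v$ and using $\sum_v p(v)=\sum_v q(v)=1$ gives $\operatorname{JSD}\le \tfrac12\cdot 2\log 2=\log 2$. Terms with zero mass are treated as zero, consistent with the convention in \cref{lem:null_hindsight}. Therefore $0\le w_j\le 1$.

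The credit bound then follows directly. Since $A_j=(R_i-b_k)w_j$ with $w_j\in[0,1]$, we have $|A_j|=|R_i-b_k|\,w_j\le |R_i-b_k|$. Because $R_i,b_k\in[0,1]$, their difference lies in $[-1,1]$, so $|R_i-b_k|\le 1$.

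For the equality characterization, the previous estimate is tight exactly when, for every $v$ with $p(v)>0$, we have $\log\tfrac{2p(v)}{p(v)+q(v)}=\log 2$, i.e.\ $q(v)=0$. Symmetrically, for every $v$ with $q(v)>0$ we need $p(v)=0$. This is precisely disjointness of supports.

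To make this rigorous, rewrite the slack as
\[
\log 2-\operatorname{JSD}=\tfrac12\sum_v\Bigl[p(v)\log\tfrac{p(v)+q(v)}{p(v)}+q(v)\log\tfrac{p(v)+q(v)}{q(v)}\Bigr],
\]
with the sum taken over positive-mass terms. Every summand is nonnegative. A summand vanishes if and only if either the mass is zero or the other distribution is zero at $v$. So $w_j=1$ holds if and only if no $v$ has $p(v)q(v)>0$.

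Conversely, if the supports are disjoint, then each nonzero term evaluates to $p(v)\log 2$ or $q(v)\log 2$, and these sum to $\log 2$.

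The only delicate point is the bookkeeping for zero-mass terms in this equality case, specifically ensuring the slack decomposition is well defined. I would handle it by restricting every sum to the respective supports of $p$ and $q$ from the start.
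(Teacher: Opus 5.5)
Your proposal is correct and follows essentially the same route as the paper. The pointwise bound $p(v)/(p(v)+q(v))\le 1$ (the paper's $p(v)/m(v)\le 2$) gives $\operatorname{JSD}\le\log 2$, equality holds exactly when $q(v)=0$ wherever $p(v)>0$ and vice versa, and your explicit slack decomposition is just a summed rewriting of the paper's term-by-term equality argument for the two KL terms.
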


The advantages are detached before optimization, and positions with no eligible future are masked out. For scored positions, \cref{lem:null_hindsight} gives $A_j=0$ exactly when $R_i=b_k$ or $p_j=q_j$. Otherwise, the centered reward determines the sign and the divergence determines the relative magnitude within a trajectory. A zero-advantage position contributes zero to the credit-weighted surrogate gradient. Although reward enters explicitly as one scalar factor, $w_j$ also depends on its rendered descriptor $O$.

\Cref{lem:baseline} applies to the unweighted trajectory score, before hindsight weighting, token-length normalization, and clipping. Since $w_j$ depends on the sampled trajectory, its baseline contribution need not have zero expectation. HP-RLVR therefore optimizes a weighted surrogate. Proofs of the two lemmas and the proposition appear in \appref{app:credit_proofs}.

\subsection{Training objective and reduction}
\label{sec:method_objective}

Let $\mathcal S_i$ be the eligible generated positions of trajectory $i$, $N_i=|\mathcal S_i|$, and $B$ the number of contributing trajectories. We maximize the empirical PPO-style surrogate \citep{schulman2017ppo}
\begin{equation}
\begin{aligned}
\widehat{\mathcal J}_{\mathrm{HP}}(\theta)
&=\frac1B\sum_{i=1}^B\frac1{N_i}\sum_{j\in\mathcal S_i}\ell_j(\theta),\\
\ell_j(\theta)
&=\min\bigl(r_jA_j,\operatorname{clip}(r_j,1-\epsilon,1+\epsilon)A_j\bigr),\\
r_j(\theta)&=\frac{\pi_\vtheta(y_j\mid h_j)}{\pi_{\theta_{\mathrm{old}}}(y_j\mid h_j)}.
\end{aligned}
\label{eq:objective}
\end{equation}

Only the current policy receives gradients, with prompts, observations, and hindsight context excluded from the policy loss. We use one optimization epoch and $\epsilon=0.2$.

Both methods average over valid tokens within each trajectory and then over trajectories. For multi-turn data stored as rows, a row with $N_{it}$ valid tokens receives weight $N_{it}/N_i$ within its trajectory. Successful episodes can terminate before failed episodes, so equal turn-row weights can give longer episodes greater aggregate weight. This reduction avoids that extra weighting by turn count. Implementation checks are described in \appref{app:implementation}.

\Cref{alg:hp_rlvr} summarizes one rollout/training step. Sampling and scoring follow \cref{sec:method_setup,sec:method_scoring}. Both scoring views are detached, divergence is evaluated over the full vocabulary, and only eligible generated tokens enter the trajectory-balanced loss.

\begin{algorithm}[!t]
\caption{HP-RLVR training step}
\label{alg:hp_rlvr}
\begin{algorithmic}[1]
\renewcommand{\algorithmiccomment}[1]{\hfill$\triangleright$ \textit{#1}}
\REQUIRE policy $\pi_\vtheta$, prompt batch $\{\vx_i\}$, verifier $R$, completed history $\mathcal C_k$
\ENSURE  updated policy $\pi_\vtheta$, extended history $\mathcal C_{k+1}$
\STATE $\theta_{\mathrm{old}} \gets \theta$ \COMMENT{frozen generating snapshot}
\STATE $b_k \gets \mathrm{mean}\{R_{i'} : i' \in \mathcal C_k\}$ \COMMENT{\cref{eq:baseline}}
\STATE $\vy_i \sim \pi_{\theta_{\mathrm{old}}}(\cdot \mid \vx_i)$ \COMMENT{one rollout per prompt}
\STATE Record $\log \pi_{\theta_{\mathrm{old}}}(y_j \mid h_j)$ 
\STATE $R_i \gets R(\vx_i, \vy_i)$ \COMMENT{verifier reward}
\STATE $O_i \gets \mathrm{Render}(R_i)$ \COMMENT{outcome descriptor}
\STATE $p_j \gets \pi_{\theta_{\mathrm{old}}}(\cdot \mid h_j)$
\STATE $q_j \gets \pi_{\theta_{\mathrm{old}}}(\cdot \mid h_j, F_j, O_i)$ \COMMENT{\cref{eq:views}}
\STATE $w_j \gets \operatorname{JSD}(p_j, q_j) / \log 2$ \COMMENT{\cref{eq:weight}}
\STATE $A_j \gets (R_i - b_k)\, w_j$ \COMMENT{dvantage (\cref{eq:advantage})}
\STATE Update $\vtheta$  \COMMENT{\cref{eq:objective}}
\STATE $\mathcal C_{k+1} \gets \mathcal C_k \cup \{R_i\}$ \COMMENT{history updated after credit}
\end{algorithmic}
\end{algorithm}

\section{Experiments}
\label{sec:experiments}

Our evaluation tests whether training-only hindsight can replace group sampling at lower computational cost, and how that substitution affects task accuracy and the availability of a training signal. We study sequence-level verification in mathematics and code generation, and sparse terminal rewards in multi-turn ALFWorld interaction.

\subsection{Experimental setup}
\label{sec:exp_setup}

We evaluate Qwen3-1.7B, 4B, and 8B \citep{qwen3team2025qwen3}, comparing HP-RLVR ($G{=}1$) with a critic-free GRPO control ($G{=}8$). Paired runs share task data, seed $42$, LoRA configuration \citep{hu2021lora}, Adam settings \citep{kingma2014adam}, and one PPO epoch per rollout step. Both use full-support training sampling and trajectory-balanced loss reduction. We evaluate at steps $25$, $50$, $75$, and $100$. A training step comprises one rollout-and-training iteration, with run-specific trajectory budgets and internal optimizer-step counts. Run-specific batching records appear in \appref{app:setup_details}.

Both methods use LoRA rank $64$, $\alpha=128$, and dropout $0$, applied to attention and MLP projections, with Adam betas $0.9$ and $0.999$, zero weight decay, a constant learning rate of $5\times10^{-6}$, gradient clipping $1.0$, and a PPO clip range of $0.2$. Each step takes one optimization epoch over its rollout batch, and the shared schedule runs $100$ rollout-and-training steps with checkpoints every $25$ steps. Training steps count rollout-and-training iterations, separately from internal optimizer steps and log records.

Mathematics trains on a $30$k-prompt OpenThoughts mixture \citep{guha2025openthoughts}, and code generation trains on $800$ problems divided into $60\%$ standard-I/O and $40\%$ function-call tasks. Both cap training responses at $1{,}024$ tokens. ALFWorld uses four prompts per rollout step, $50$ turns, an eight-turn retained history, a $4{,}096$-token per-turn response cap, and an $8{,}192$-token training context. Mathematics uses an official symbolic verifier, code generation a test harness, and ALFWorld the environment's terminal success signal.

Both methods sample at temperature $1$ without top-$p$ or top-$k$ truncation, which keeps the sampled distribution, the behavior scoring, and the likelihood-ratio denominator aligned. At each generated position $j$ the rollout engine records the token $y_j$ and its generation-time log-probability, which supplies the behavior denominator of \cref{eq:objective}; for agents, $h_j$ contains the retained observation--action history rather than necessarily the entire episode. The full behavior distribution of \cref{eq:views} is obtained by replaying the same history with the generating snapshot, and a check at the first update replays current log-probabilities against that record, aborting the run on a mismatch rather than biasing the ratios. The exact divergence of \cref{eq:weight} is evaluated over the full vocabulary in groups of token positions, with no top-$k$ or restricted-support approximation: stored token weights require $O(T)$ memory and temporary divergence calculations $O(CV)$ for $T$ positions, vocabulary size $V$, and groups of at most $C$ positions, excluding model states and activations.

Mathematics uses AIME24, AIME25, and HMMT25 \citep{balunovic2025matharena,hmmt2025}, with $30$ problems per benchmark and $12$ samples per problem, thinking enabled, temperature $1$, top-$p$ $0.95$, and a $38{,}912$-token response cap. Avg@$12$ averages sampled correctness and Pass@$12$ measures whether any sample solves a problem. Mathematics aggregates average the three benchmarks. Code evaluates at temperature $0.2$ on all $1{,}055$ LiveCodeBench release-$6$ problems \citep{jain2024livecodebench}, with Pass@$1$ and Pass@$8$ computed as eight-sample estimates from a common dump. ALFWorld reports the mean success rate across in-distribution (ID, $140$ games) and out-of-distribution (OOD, $134$ games) splits \citep{shridhar2021alfworld}, evaluating one episode per game under the $50$-turn protocol. Each method has one training seed. All timings use 8 Ascend 910B NPUs.

\paragraph{Implementation Details}
\label{app:implementation}

The dense packed attention path gives approximately $5\times10^{-4}$ to $6\times10^{-4}$ nats of KL against a reference implementation on identical context. Results from the discarded attention prototype are excluded.

For the released implementation, the trajectory-balanced reduction of \cref{sec:method_objective} carries per-trajectory assertions: the within-trajectory row weights $w_{it}=N_{it}/N_i$ must sum to one, and the effective trajectory weight must be uncorrelated with its turn and token counts. The clipped surrogate of \cref{eq:objective} is applied with the coefficient used by the GRPO control in the same stack.

\subsection{Cost of replacing group sampling}
\label{sec:exp_efficiency}

The practical benefit of $G{=}1$ depends on whether its additional hindsight scoring costs less than the group rollouts it replaces. We test this on Qwen3-8B at a shared four-prompt budget and $100$ rollout steps, generating four trajectories per step for HP-RLVR and $32$ for GRPO. \Cref{fig:single_rollout_cost} shows lower end-to-end training time in all three task families, with GRPO-to-HP-RLVR ratios of $1.21$--$2.30$. The largest reduction occurs on ALFWorld, saving $255.8$ NPU-hours despite the additional scoring passes. Wall-clock spans each run's first and last timestamps. Absolute timings and NPU-hours appear in \cref{tab:efficiency_scales}. The comparison measures cost at a fixed prompt and step budget, rather than at equal trajectory counts or matched accuracy.

\begin{table*}[t]
\centering
\small
\setlength{\tabcolsep}{4pt}
\caption{Training-cost records for completed $100$-step runs. NPU-hours are separately logged run totals. Ratio: GRPO/HP-RLVR training time, so values below one favor GRPO. The 4B mathematics pair uses $32$/$16$ prompts per step for HP-RLVR/GRPO.}
\label{tab:efficiency_scales}
\begin{tabular}{@{}ll rr rr r@{}}
\toprule
& & \multicolumn{2}{c}{Time (min)} & \multicolumn{2}{c}{NPU-hours} & \\
\cmidrule(lr){3-4}\cmidrule(lr){5-6}
Model & Task & GRPO & HP-RLVR & GRPO & HP-RLVR & Ratio \\
\midrule
1.7B & Mathematics & $92.70$ & $140.27$ & $12.36$ & $18.70$ & $0.66\times$ \\
 & Code & $67.55$ & $\mathbf{45.80}$ & $9.01$ & $\mathbf{6.11}$ & $\mathbf{1.48\times}$ \\
 & ALFWorld & $3242.85$ & $\mathbf{1003.25}$ & $432.4$ & $\mathbf{133.8}$ & $\mathbf{3.23\times}$ \\
\midrule
4B & Mathematics & $175.66$ & $\mathbf{125.47}$ & $23.42$ & $\mathbf{16.73}$ & $\mathbf{1.40\times}$ \\
 & Code & $88.80$ & $\mathbf{61.09}$ & $11.84$ & $\mathbf{8.15}$ & $\mathbf{1.45\times}$ \\
 & ALFWorld & $3067.50$ & $\mathbf{981.00}$ & $409$ & $\mathbf{130.8}$ & $\mathbf{3.13\times}$ \\
\midrule
8B & Mathematics & $167.68$ & $\mathbf{138.30}$ & $22.4$ & $\mathbf{18.4}$ & $\mathbf{1.21\times}$ \\
 & Code & $98.77$ & $\mathbf{67.87}$ & $13.2$ & $\mathbf{9.0}$ & $\mathbf{1.46\times}$ \\
 & ALFWorld & $3389.55$ & $\mathbf{1471.05}$ & $451.9$ & $\mathbf{196.1}$ & $\mathbf{2.30\times}$ \\
\bottomrule
\end{tabular}
\end{table*}

\paragraph{Where the control spends computation.}
\Cref{tab:grpo_phase_cost} decomposes the recorded 4B GRPO mathematics run. Generation accounts for $72.9\%$ of its wall-clock, compared with $12.7\%$ for policy training and $5.5\%$ for behavior-policy replay. This run generates $12{,}800$ trajectories from $16$ prompts per step, a different prompt budget from the 8B pairs. Its $1{,}024$-token training cap truncates $41.6\%$ of responses. The profile identifies generation as the largest measured cost in this control. It does not provide a paired phase decomposition for HP-RLVR or isolate the cost of hindsight scoring.

\begin{table}[!t]
\centering
\small
\setlength{\tabcolsep}{5pt}
\caption{Qwen3-4B GRPO mathematics timing over $100$ steps at $16$ prompts and $128$ trajectories per step. Residual time is the total minus the four listed phase measurements.}
\label{tab:grpo_phase_cost}
\begin{tabular}{@{}lrr@{}}
\toprule
Phase & Time (s) & Share (\%) \\
\midrule
Rollout generation & $7{,}678.48$ & $72.9$ \\
Policy training & $1{,}343.70$ & $12.7$ \\
Behavior-policy replay & $576.85$ & $5.5$ \\
Weight synchronization & $141.96$ & $1.3$ \\
Residual time & $798.51$ & $7.6$ \\
\midrule
Total & $10{,}539.50$ & $100.0$ \\
\bottomrule
\end{tabular}
\end{table}

\subsection{Task accuracy under single-rollout training}
\label{sec:exp_benchmarks}

To assess the performance--time tradeoff, \cref{fig:checkpoint_accuracy} pairs each configuration's best observed task score within $100$ steps with its measured complete-run training time. Checkpoint selection maximizes the mean benchmark score at a single checkpoint. Mathematics averages AIME24, AIME25, and HMMT25 Avg@$12$, code uses eight-sample mean Pass@$1$ on LiveCodeBench, and ALFWorld averages ID and OOD success. \Cref{tab:benchmarks} reports each configuration's best observed value per metric. Both methods are evaluated at all four checkpoints at every size, and code passes use eight completions per problem from a common dump.

\begin{table*}[t]
\centering
\small
\setlength{\tabcolsep}{3.5pt}
\caption{Qwen3-1.7B, 4B, and 8B results under the current protocol (\%). Each cell is the best observed value for that metric across the four checkpoints. Avg@$12$ averages correctness over $12$ samples per problem and the mathematics \emph{Avg.} is the mean over the three benchmarks. Code reports Pass@$1$ and Pass@$8$ on $1{,}055$ LiveCodeBench problems. ALFWorld reports ID/OOD success ($140$/$134$ games) and their mean.}
\label{tab:benchmarks}
\begin{tabular}{@{}l l cccc c cc ccc@{}}
\toprule
& & \multicolumn{4}{c}{Mathematics (Avg@$12$)} & & \multicolumn{2}{c}{Code} & \multicolumn{3}{c}{ALFWorld} \\
\cmidrule(lr){3-6}\cmidrule(lr){8-9}\cmidrule(lr){10-12}
Size & Method & AIME24 & AIME25 & HMMT25 & Avg. & Pass@$12$ & Pass@$1$ & Pass@$8$ & ID & OOD & Avg. \\
\midrule
1.7B & GRPO ($G{=}8$) & $51.90$ & $38.10$ & $25.00$ & $37.80$ & $65.56$ & $25.78$ & $32.04$ & $40.00$ & $39.55$ & $38.71$ \\
1.7B & HP-RLVR ($G{=}1$) & $52.20$ & $40.30$ & $27.20$ & $38.90$ & $66.67$ & $25.11$ & $31.75$ & $32.14$ & $31.34$ & $30.64$ \\
\midrule
4B & GRPO ($G{=}8$) & $71.39$ & $64.44$ & $46.39$ & $60.09$ & $76.67$ & $37.29$ & $45.31$ & $70.00$ & $66.42$ & $68.21$ \\
4B & HP-RLVR ($G{=}1$) & $76.67$ & $67.50$ & $46.39$ & $62.32$ & $78.89$ & $37.24$ & $44.55$ & $62.14$ & $63.43$ & $61.72$ \\
\midrule
8B & GRPO ($G{=}8$) & $74.20$ & $72.20$ & $45.00$ & $63.07$ & $81.10$ & $40.65$ & $49.29$ & $65.71$ & $64.93$ & $65.32$ \\
8B & HP-RLVR ($G{=}1$) & $75.00$ & $73.30$ & $49.20$ & $65.73$ & $79.97$ & $38.87$ & $47.30$ & $62.14$ & $64.18$ & $62.45$ \\
\bottomrule
\end{tabular}
\end{table*}

\begin{figure*}[!t]
\centering
\includegraphics[width=\linewidth]{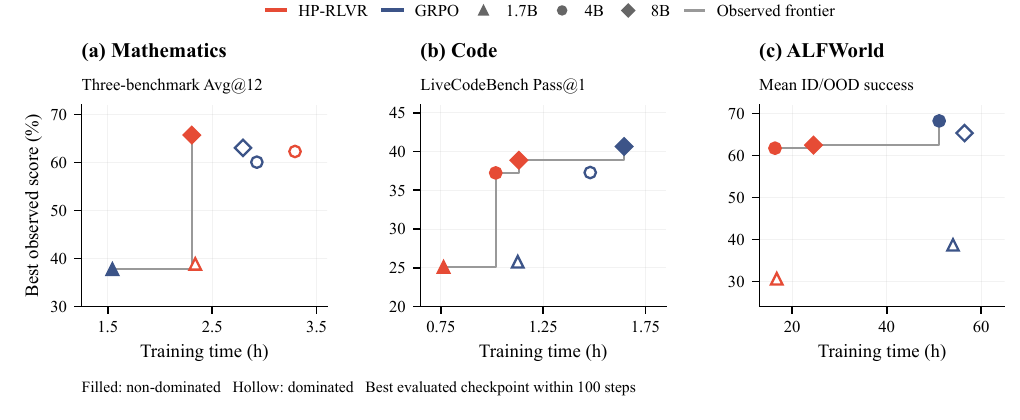}
\caption{Best observed task performance versus complete-run training time over $100$ steps. Each method and model size uses the highest whole-task score at steps $25/50/75/100$. Filled and hollow points are non-dominated and dominated within each task, respectively, and gray steps trace the observed budget envelope. Code uses Pass@$1$ over $1{,}055$ problems. Single-seed runs differ in hardware and prompt budgets, including $32$/$16$ prompts per step for 4B mathematics (HP-RLVR/GRPO). The 1.7B HP-RLVR ALFWorld timing includes a restart.}
\label{fig:checkpoint_accuracy}
\end{figure*}

At 8B, HP-RLVR's best mathematics score is $65.73\%$ versus $63.07\%$ for GRPO, with $1.21\times$ shorter training time. Code and ALFWorld expose a different tradeoff: HP-RLVR trains $1.46\times$ and $2.30\times$ faster, respectively, with best observed scores of $38.87\%$ versus $40.65\%$ and $62.45\%$ versus $65.32\%$. HP-RLVR contributes non-dominated configurations in all three task families. The frontier describes the measured configurations and does not establish statistical superiority or isolate estimator effects across hardware and budgets.

In mathematics, HP-RLVR has the higher best observed Avg@$12$ at all three scales, by $1.1$ to $2.7$ points: $38.90\%$ against $37.80\%$ at 1.7B, $62.32\%$ against $60.09\%$ at 4B, and $65.73\%$ against $63.07\%$ at 8B. It also has the higher Pass@$12$ at 1.7B and 4B, extending the comparison to solution coverage, while GRPO reaches the higher 8B Pass@$12$ ($81.10\%$ versus $79.97\%$). These results show stronger accuracy retention relative to the control under a constrained training regime: responses are capped at $1{,}024$ tokens during training and $38{,}912$ during evaluation, and approximately $42\%$ of the 4B control's training responses reach the cap.

The ordering changes for code generation, where GRPO has the higher best observed Pass@$1$ and Pass@$8$ at all three sizes. The margin is narrowest at 4B, where HP-RLVR reaches $37.24\%$ Pass@$1$ and $44.55\%$ Pass@$8$ against $37.29\%$ and $45.31\%$ for GRPO on $1{,}055$ problems. On ALFWorld, GRPO has the higher ID success, OOD success, and mean at all three sizes. Together, these comparisons indicate a task-dependent accuracy tradeoff when replacing group-relative updates with single-rollout credit.

The 8B runs evaluate all three task families (\cref{tab:benchmarks}). Code generation and ALFWorld both favor GRPO at 8B, and mathematics favors HP-RLVR, at $65.73\%$ against $63.07\%$.

\paragraph{Best observed values.}
The benchmark-level results in \cref{tab:benchmarks} report each configuration's best observed value per metric. At 4B, GRPO has the higher ALFWorld ID success ($70.00\%$ versus $62.14\%$), OOD success ($66.42\%$ versus $63.43\%$), and mean ($68.21\%$ versus $61.72\%$). For the ID and OOD evaluations of $140$ and $134$ games, individual success-rate estimates near $65\%$ have approximate binomial standard errors of $4.1$ percentage points. Paired uncertainty requires the per-game discordance counts.

\subsection{Training-signal availability and controls}
\label{sec:exp_diag}
\label{sec:exp_failure_modes}
\label{sec:exp_component}

\begin{figure}[!t]
\centering
\includegraphics[width=\linewidth]{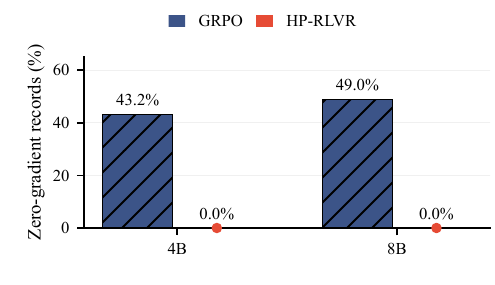}
\caption{ALFWorld zero gradient-norm records as a fraction of all logged records.}
\label{fig:training_signal}
\end{figure}

Group-relative credit vanishes when all sampled outcomes agree. HP-RLVR instead compares each outcome with a historical baseline, so failed trajectories can carry negative credit without requiring a successful attempt at the same prompt. The nonnegative JSD weights distribute this credit across eligible tokens (\cref{eq:advantage,eq:weight}). In 4B mathematics, $430$ of $902$ recorded reward groups ($47.7\%$) are homogeneous, supplying no reward-based advantage. In ALFWorld, zero gradient-norm records account for $43.2\%$ and $49.0\%$ of GRPO records at 4B and 8B, respectively, versus none for HP-RLVR (\cref{fig:training_signal}). These diagnostics support the availability of single-rollout updates without an informative within-prompt reward comparison. Counts appear on the plot and in \cref{tab:zerograd}. Reward groups measure uninformative comparisons, whereas logged gradient records measure current-gradient availability.

Both methods share trajectory-balanced reduction and interaction settings. Generation-time probabilities and replay checks align PPO ratios (\appref{app:sampling_scoring}). The comparisons assess the combined estimator: a matched $w_j=1$ control would test token weighting, and future-context controls would test the information source.

\paragraph{Evaluation coverage.}
The main-text \cref{tab:benchmarks} reports each configuration's best observed value. Both methods appear at all three sizes for mathematics, code, and ALFWorld. Absolute timings appear in \cref{tab:efficiency_scales}, and paired 1.7B comparisons appear in \appref{app:frozen_1p7b}.

\paragraph{Training-side accounting.}
The mathematics comparison combines a short training cap with long evaluation responses. In the Qwen3-4B GRPO run, $41.6\%$ of training responses reach the $1{,}024$-token cap, whereas evaluation allows $38{,}912$ tokens. The training statistics therefore include a substantial truncated-response population. Separately, $430$ of $902$ reward groups ($47.7\%$) have zero reward variance, giving zero group-relative advantage under \cref{eq:grpo_adv}. This demonstrates the absence of reward-based gradient contributions from those homogeneous groups, alongside the agent gradient-log diagnostic in \cref{tab:zerograd}.

\FloatBarrier
\section{Conclusion}
\label{sec:conclusion}

HP-RLVR assigns token credit from one rollout per prompt: historical reward centering sets the update direction, while bounded, nonnegative JSD weights from causal and hindsight scoring allocate its magnitude. This rule applies to successful and failed trajectories and can give failures negative credit without a corrective reference. At a shared prompt and $100$-step budget, paired Qwen3-8B runs train $1.21\times$, $1.46\times$, and $2.30\times$ faster than $G{=}8$ GRPO on mathematics, code, and ALFWorld, using one eighth as many rollouts per prompt. Accuracy is comparable overall, with HP-RLVR ahead on mathematics at all three sizes and GRPO ahead on code and ALFWorld. In ALFWorld, $43.2\%$ (4B) and $49.0\%$ (8B) of GRPO's gradient-norm records are zero, versus none for HP-RLVR.

\section*{Limitations}

Experiments cover Qwen3 models and text tasks with terminal verification. Other model families, multimodal feedback, and longer training contexts offer natural extensions. 

\bibliography{refs}

\appendix
\raggedbottom
\renewcommand{\theHproposition}{appendix.\thesection.\arabic{proposition}}
\renewcommand{\theHlemma}{appendix.\thesection.\arabic{lemma}}

\appendixcontentstrue
\begingroup
\renewcommand{\contentsname}{Appendix Contents}
\setcounter{tocdepth}{2}
\hypersetup{linkcolor=black}
\tableofcontents
\endgroup
\clearpage

\section{Notation and Preliminaries}
\label{app:notation}

\Cref{tab:notation} summarizes the notation for HP-RLVR and GRPO. Symbols used only in a proof are defined where introduced.

\begin{table}[!ht]
\centering
\small
\setlength{\tabcolsep}{4pt}
\caption{Core notation.}
\label{tab:notation}
\begin{tabularx}{\linewidth}{@{}l>{\raggedright\arraybackslash}X@{}}
\toprule
Symbol & Meaning \\
\midrule
$\vx,\vy$ & Prompt and generated trajectory. \\
$\pi_\vtheta,\pi_{\theta_{\mathrm{old}}}$ & Current policy and rollout snapshot. \\
$i,j,k$ & Trajectory, token, and batch indices. \\
$R_i,O$ & Verifier reward and outcome descriptor. \\
$G$ & Group size: rollouts per prompt. \\
$h_j,y_j$ & History and token at position $j$. \\
$F_j$ & Eligible future trace. \\
$p_j,q_j$ & Causal and hindsight snapshot distributions. \\
$\mathcal C_k$ & Rewards of trajectories completed before batch $k$. \\
$b_k$ & Mean reward of completed trajectories before batch $k$. \\
$w_j$ & Normalized JSD token weight. \\
$A_j$ & Token advantage $(R_i-b_k)w_j$. \\
$\hat A_i$ & GRPO group-relative advantage. \\
$r_j(\vtheta),\epsilon$ & Policy likelihood ratio and PPO clipping range. \\
\bottomrule
\end{tabularx}
\end{table}

\section{Proofs}
\label{app:proofs}
\label{app:credit_proofs}

We first establish the historical-baseline identity underlying single-rollout reward centering, then the divergence properties underlying local allocation. The former is an expectation identity for the unweighted trajectory score. The latter hold for each realized pair of scoring distributions.

\paragraph{Conditional sampling and regularity.}
\label{app:baseline_assumptions}
Fix a realization of the completed history $\mathcal H_k$ and a parameter value $\vtheta$ in an open neighborhood $U\subset\mathbb R^d$. All history-dependent quantities, including $b_k$, are held fixed when differentiating. The following assumptions specify the scope of \cref{lem:baseline}.
\begin{enumerate}
\item[(B1)] \emph{Matched conditional sampling.} Given this history, the current prompt has the parameter-independent law $\mathcal D$. Conditional on a prompt $\vx$, the current trajectory has density $f_\vtheta(\vy\mid\vx)$ with respect to a parameter-independent reference measure $\mu_\vx$. The law used to sample this trajectory is the same law being differentiated. The verifier $R(\vx,\vy)\in[0,1]$, environment transition kernels, and external stopping or truncation rules have no explicit parameter dependence. The rollout terminates almost surely after finitely many generated tokens. Its density factors into policy token probabilities and parameter-independent factors. Policy-generated stopping decisions are included among the scored tokens. For interactive tasks, $\vy$ includes the environment transitions.
\item[(B2)] \emph{Common support and differentiability.} For $\mathcal D$-almost every $\vx$, there is a measurable set $S_\vx$, independent of the parameter in $U$, on which $f_{\boldsymbol{\eta}}(\vy\mid\vx)>0$ for all $\boldsymbol{\eta}\in U$, up to a common $\mu_\vx$-null set. The density is zero off $S_\vx$, normalized for every $\boldsymbol{\eta}\in U$, and continuously differentiable in $\boldsymbol{\eta}$ on $S_\vx$.
\item[(B3)] \emph{Integrable domination.} There is a measurable $d(\vx,\vy)\ge0$ such that, almost everywhere,
\begin{equation*}
\sup_{\boldsymbol{\eta}\in U}\|\nabla_{\boldsymbol{\eta}} f_{\boldsymbol{\eta}}(\vy\mid\vx)\|
\le d(\vx,\vy),
\end{equation*}
and
\begin{equation*}
\int_{\mathcal X}\int d(\vx,\vy)\,\mu_\vx(d\vy)\,\mathcal D(d\vx)<\infty.
\end{equation*}
This permits differentiation under both the trajectory integral and the task expectation.
\end{enumerate}

By the autoregressive factorization in (B1), the trajectory score on its support is
\begin{equation}
\begin{aligned}
g_\theta(\vx,\vy)
&:=\nabla_\vtheta\log f_\vtheta(\vy\mid\vx)\\
&=\sum_{j\in\mathcal G(\vy)}
\nabla_\vtheta\log\pi_\vtheta(y_j\mid h_j),
\end{aligned}
\label{eq:trajectory_score}
\end{equation}
where $\mathcal G(\vy)$ contains \emph{all} policy-generated positions, including a sampled termination token when present. Observations contribute no policy score. This sum has no eligibility mask or token-length denominator. The policy probabilities must describe the actual sampling law, including its temperature convention. A parameter-dependent support change is excluded by (B2).

\paragraph{Divergence conventions.}
For probability mass functions $a,b$ on the finite vocabulary $\mathcal V$, define
\begin{equation*}
D_{\mathrm{KL}}(a\Vert b)
=\sum_{v:\,a(v)>0}a(v)\log\frac{a(v)}{b(v)},
\end{equation*}
with value $+\infty$ if $a(v)>0=b(v)$ for some $v$. Terms with $a(v)=0$ contribute zero, including when $b(v)=0$. For $m=(p+q)/2$, both $D_{\mathrm{KL}}(p\Vert m)$ and $D_{\mathrm{KL}}(q\Vert m)$ are finite because $m(v)\ge p(v)/2$ and $m(v)\ge q(v)/2$.

\subsection{Historical-baseline identity}
\label{app:proof_baseline}

\begin{proof}[Proof of \cref{lem:baseline}]
Work under the fixed history above and write $\E_k$ for expectation under (B1). Set the score to zero off $S_\vx$, a set of zero trajectory probability. Assumption (B3) gives
\begin{equation*}
\begin{aligned}
\E_k[\|g_\theta\|]
&=\int\!\!\int_{S_\vx}
\|\nabla_\vtheta f_\vtheta\|\,d\mu_\vx\,d\mathcal D\\
&\le\int\!\!\int d\,d\mu_\vx\,d\mathcal D<\infty.
\end{aligned}
\end{equation*}
Here and below the arguments $(\vy\mid\vx)$ are suppressed inside integrals. Since $R$ and $b_k$ are bounded, all score expectations used below are well-defined.

By (B2), $f_\vtheta g_\theta=\nabla_\vtheta f_\vtheta$ on the common support. Normalization and differentiation under the integral, justified by (B3), imply, for $\mathcal D$-almost every $\vx$,
\begin{equation*}
\begin{aligned}
\E_k[g_\theta\mid\vx]
&=\int_{S_\vx}\nabla_\vtheta f_\vtheta\,d\mu_\vx\\
&=\nabla_\vtheta\int_{S_\vx} f_\vtheta\,d\mu_\vx=0.
\end{aligned}
\end{equation*}
Consequently $\E_k[g_\theta]=0$. The same domination, together with $0\le R\le1$ and its parameter independence, gives
\begin{equation*}
\begin{aligned}
\nabla_\vtheta\mathcal J_{\mathrm{RLVR}}(\theta)
&=\nabla_\vtheta\int\!\!\int R f_\vtheta\,d\mu_\vx\,d\mathcal D\\
&=\int\!\!\int R\nabla_\vtheta f_\vtheta\,d\mu_\vx\,d\mathcal D\\
&=\E_k[R g_\theta].
\end{aligned}
\end{equation*}
The $\mathcal H_k$-measurable baseline is constant under this conditional expectation. Hence
\begin{equation*}
\begin{aligned}
\E_k[(R-b_k)g_\theta]
&=\E_k[Rg_\theta]-b_k\E_k[g_\theta]\\
&=\nabla_\vtheta\mathcal J_{\mathrm{RLVR}}(\theta),
\end{aligned}
\end{equation*}
which proves \cref{eq:historical_baseline_identity}. Applying the identity at the current policy parameter gives the conditional unbiasedness of a single sampled estimator. Earlier batches may be dependent, and $b_k$ may be correlated with the current policy through the history. The proof requires only the stated conditional sampling law and that $b_k$ is fixed before the current rollout. No variance-reduction or nonzero-gradient conclusion follows from this identity alone.
\end{proof}

\subsection{Zero divergence}
\label{app:proof_null_hindsight}

\begin{proof}[Proof of \cref{lem:null_hindsight}]
Let $\phi(t)=t\log t-t+1$ for $t>0$, with $\phi(0)=1$ by continuity. Its derivative is $\phi'(t)=\log t$, so $\phi$ decreases on $(0,1)$ and increases on $(1,\infty)$. Thus $\phi(t)\ge0$ for all $t\ge0$, with equality exactly at $t=1$.

Set $M=\{v\in\mathcal V:m(v)>0\}$. Both $p$ and $q$ vanish outside $M$. Using $\sum_{v\in M}p(v)=\sum_{v\in M}m(v)=1$,
\begin{equation*}
\begin{aligned}
D_{\mathrm{KL}}(p\Vert m)
&=\sum_{v\in M}m(v)\phi\!\left(\frac{p(v)}{m(v)}\right)\\
&\ge0.
\end{aligned}
\end{equation*}
Every coefficient $m(v)$ in this finite sum is positive. Equality therefore holds exactly when $p(v)/m(v)=1$ at every $v\in M$, equivalently $p=m$ on $\mathcal V$. The same argument applies to $q$. Since JSD is half the sum of these two nonnegative divergences, it is zero exactly when $p=m=q$. As $m=(p+q)/2$, this is equivalent to $p=q$. This proves both directions, including distributions with zero probabilities.
\end{proof}

\subsection{Bounded credit}
\label{app:proof_bounded_credit}

\begin{proof}[Proof of \cref{prop:bounded_credit}]
Fix a scored position and write $p=p_j$, $q=q_j$, and $m=(p+q)/2$. The lower bound on JSD follows from \cref{lem:null_hindsight}. For every $v$ with $p(v)>0$,
\begin{equation*}
\frac{p(v)}{m(v)}=\frac{2p(v)}{p(v)+q(v)}\le2.
\end{equation*}
Multiplying the logarithmic bound by $p(v)$ and summing gives
\begin{equation*}
D_{\mathrm{KL}}(p\Vert m)
\le\sum_{v:\,p(v)>0}p(v)\log2=\log2.
\end{equation*}
Equality holds exactly when $q(v)=0$ at every $v$ with $p(v)>0$, since every such summand has positive coefficient. The analogous bound holds for $q$, with equality exactly when $p$ vanishes on the support of $q$. Consequently
\begin{equation*}
0\le\operatorname{JSD}(p,q)\le\log2,
\end{equation*}
and the upper bound is attained exactly when the supports are disjoint. Dividing by $\log2>0$ proves the claims about $w_j$. Since $R_i,b_k\in[0,1]$,
\begin{equation*}
|A_j|=|R_i-b_k|w_j\le|R_i-b_k|\le1.
\end{equation*}
The historical mean lies in $[0,1]$ because it averages rewards in that interval, and the empty-history baseline lies there by assumption. The absolute bound $|A_j|=1$ is attained exactly when $(R_i,b_k)$ is $(1,0)$ or $(0,1)$ and the two supports are disjoint. In particular, overlapping supports imply $w_j<1$.
\end{proof}

\paragraph{Consequences for the surrogate.}
Combining \cref{lem:null_hindsight,prop:bounded_credit} with $A_j=(R_i-b_k)w_j$ gives, at every scored position,
\begin{equation*}
A_j=0\quad\Longleftrightarrow\quad
R_i=b_k\ \text{or}\ p_j=q_j.
\end{equation*}
Because $A_j$ is detached, $A_j=0$ makes the position's entire credit-weighted objective term identically zero wherever the policy ratio is defined. Its gradient is therefore zero, including at the clipping boundaries. The converse for gradients need not hold: clipping or a zero policy score can also yield a zero gradient at a position with $A_j\ne0$. The magnitude bound concerns advantages, not policy scores or optimizer stability.

For trajectory-dependent weights, the baseline term contains expectations of the form
\begin{equation*}
b_k\E_k\!\left[\sum_{j\in\mathcal S_i}
w_j\nabla_\vtheta\log\pi_\vtheta(y_j\mid h_j)\right],
\end{equation*}
which the score identity does not force to vanish. Detaching $w_j$ removes its derivative from the surrogate but does not remove its statistical dependence on the rollout. Eligibility masking, token-length normalization, and clipping further modify the update. Thus \cref{lem:baseline} justifies historical centering for the unweighted single-rollout estimator. It does not establish unbiasedness of the final HP-RLVR surrogate. The divergence results characterize predictive disagreement, not causal responsibility for the verified outcome.

\section{Method Details}
\label{app:method_details}

\subsection{Sampling and scoring details}
\label{app:sampling_scoring}

\paragraph{Sampling and behavior-policy consistency.}
For each prompt, $\pi_{\theta_{\mathrm{old}}}$ samples one trajectory ($G=1$) and receives its verifier reward:
\begin{equation}
\begin{aligned}
\vy_i&\sim\pi_{\theta_{\mathrm{old}}}(\cdot\mid\vx_i),\\
R_i&=R(\vx_i,\vy_i).
\end{aligned}
\label{eq:single_rollout}
\end{equation}
Generation-time recording, the temperature convention, the replay check, and the memory footprint of the exact divergence appear with the experimental setup in \cref{sec:exp_setup}. Evaluation uses each benchmark's specified sampling protocol.

\paragraph{Outcome descriptor and allowed future.}
The construction of $O$ and the eligibility rule for $F_j$ appear in \cref{sec:method_scoring}. Both scoring views use the same detached generating snapshot.

\section{Experimental Details}
\label{app:experimental_details}

\subsection{Setup Details}
\label{app:setup_details}

\paragraph{Batching and accounting records.}
The main text reports the optimizer, schedule, data, and evaluation settings shared by both methods. What follows is the run-specific batching record. The Qwen3-4B GRPO mathematics notes record $16$ prompts per step and $128$ generated sequences, with the global batch configured to $128$. This differs from the four-prompt accounting of the Qwen3-8B timing runs. ALFWorld uses a global batch setting of $32$ turn rows. A global turn-row batch is not an environment-trajectory budget, and per-step trajectory counts differ between the timing runs and the longer evaluation runs.

\subsection{Paired comparisons at Qwen3-1.7B}
\label{app:frozen_1p7b}

These comparisons assess whether the 4B mathematics ordering extends to a smaller model, and whether it also holds for code. Both methods are evaluated at all four checkpoints under the current protocol. The best observed 1.7B ALFWorld scores and training times appear in \cref{fig:checkpoint_accuracy}. Code uses eight completions per problem on the full $1{,}055$-problem LiveCodeBench set, with Pass@$1$ and Pass@$8$ computed from the same samples. 

The 1.7B mathematics comparison shows the same ordering as at 4B (\cref{tab:benchmarks}). HP-RLVR has the higher best observed three-benchmark score, $38.90\%$ against $37.80\%$, and the higher Pass@$12$, $66.67\%$ against $65.56\%$.

The code ordering reverses (\cref{tab:benchmarks}): GRPO has the higher best observed Pass@$1$ and Pass@$8$ at all three sizes. This contrast with mathematics limits the accuracy claim to the tested task and model setting. Both metrics use temperature $0.2$.

\section{Additional Efficiency Measurements}
\label{app:efficiency_detail}
\subsection{Current run timings and signal records}

\Cref{tab:zerograd} preserves the logged-record denominators of \cref{fig:training_signal}. Completed Qwen3-8B timings appear in the main-text \cref{tab:efficiency_scales}.

\begin{table}[t]
\centering
\small
\setlength{\tabcolsep}{2.5pt}
\caption{ALFWorld zero gradient-norm records, counted with the same parser. Fractions use all logged gradient records, rather than distinct optimizer steps. The figure reports the same fractions.}
\label{tab:zerograd}
\begin{tabular}{@{}l l c c@{}}
\toprule
Backbone & Method & \makecell{Zero-gradient\\records} & Fraction \\
\midrule
\multirow{2}{*}{Qwen3-4B} & GRPO ($G{=}8$) & $1{,}207 / 2{,}792$ & $43.2\%$ \\
 & HP-RLVR ($G{=}1$) & $\mathbf{0 / 353}$ & $\mathbf{0.0\%}$ \\
\addlinespace
\multirow{2}{*}{Qwen3-8B} & GRPO ($G{=}8$) & $1{,}343 / 2{,}739$ & $49.0\%$ \\
 & HP-RLVR ($G{=}1$) & $\mathbf{0 / 293}$ & $\mathbf{0.0\%}$ \\
\bottomrule
\end{tabular}
\end{table}

\paragraph{Timing coverage.}
Run-completion and accounting qualifications are reported alongside the main-text \cref{tab:efficiency_scales}. The 4B timing profile is not a completed paired $100$-step run.

\FloatBarrier
\section{LLM Usage Disclosure}
\label{app:llm_disclosure}
In answer to the submission questions on the use of large language models, a large language model was used to aid and polish the writing of this paper, for retrieval and discovery such as finding related work, and to draft and revise sections of the manuscript. It also generated the plotting code for \cref{fig:single_rollout_cost} and checked internal consistency between the text, the tables, and the figures. All research decisions, methods, implementations, experiments, and analysis are the authors' own work. Every numerical result reported here was produced by the authors' training and evaluation code, and every value was verified by the authors against the run artifacts before submission. The model did not generate any cited reference: every candidate reference it surfaced was checked against the bibliographic record and against the paper itself before being cited.

\end{document}